\begin{document}

\title{Bridging Information Criteria and Parameter Shrinkage\\ for Model Selection}

\author{\name Kun Zhang \email kzhang@tuebingen.mpg.de \\
       \addr Max Planck Institute for Intelligent Systems\\
Spemannstr.\ 38, 72076 T\"ubingen, Germany
       \AND
       \name Heng Peng \email hpeng@math.hkbu.edu.hk \\
       \addr Dept of Mathematics \\
Hong Kong Baptist University, Hong Kong
	\AND
       \name Laiwan Chan \email lwchan@cse.cuhk.edu.hk \\
       \addr Dept of Computer Science and Engineering\\
       Chinese University of Hong Kong
       \AND
       \name Aapo Hyv\"{a}rinen \email aapo.hyvarinen@helsinki.fi \\
       \addr Dept of Computer Science, HIIT,  and Dept of Mathematics and Statistics\\
       University of Helsinki, Finland
     }

\maketitle

\begin{abstract}
Model selection based on classical information criteria, such as
BIC, is generally computationally demanding, but its
properties are well studied. On the other hand,
model selection based on parameter shrinkage by $\ell_1$-type
penalties is computationally efficient. In this paper we make an attempt to
combine their strengths, and propose a 
simple approach that penalizes the likelihood with data-dependent $\ell_1$ penalties as in adaptive Lasso and exploits
a fixed penalization parameter. Even for finite samples, its model selection results
approximately coincide with those based on information criteria; 
in particular, we show that in some special cases, this approach and 
the corresponding information criterion
produce exactly the same model.
One can also consider this approach as a way to directly determine the penalization parameter in adaptive Lasso to achieve information criteria-like model selection.  As extensions, we apply this idea to complex
models including Gaussian mixture model and mixture of factor
analyzers, whose model selection is traditionally difficult to do;
by adopting suitable penalties, we provide continuous
approximators to the corresponding information criteria, which 
are easy to optimize and enable efficient model selection. 

\end{abstract}

\begin{keywords}
Model selection, Parameter Shrinkage, Information criterion, Adaptive Lasso,
  Factor analysis, Gaussian mixture, Mixture of factor analysizers
\end{keywords}

\section{Introduction}
Model selection aims at choosing, from a set of candidates, a
mathematical model that strikes a balance between simplicity and
adequacy to the observed data. Traditionally, it is performed by
optimizing some information criteria (ICs). In particular, the Bayesian
information criterion (BIC,~\cite{Schwarz78}),
AIC~\citep{Akaike73}, and the minimum message length (MML)
principle~\citep{Wallace87}, are widely used in different
statistical model selection problems. These criteria have a
discrete feasible domain. 
Their optimization usually involves exhaustive search over all possible models, which is 
computationally intensive.

When the model is very complex or the space of candidate models is
very large, a brute force testing of all possible models causes
very high computational costs and becomes impractical. To tackle
this problem, a lot of efforts have been made to adjust the model
complexity continuously. For instance, for the linear regression problem,
Lasso~\citep{Tibshirani96} applies the $\ell_1$ penalty on the
coefficients which could shrinking some
coefficients to zero. Various approaches, including adaptive Lasso (ALasso,~\cite{Zou06}), SCAD~\citep{Fan01}, and FIRST~\citep{Hwang09} make use of similar but different ways of parameter shrinkage. 
For finite
mixture models, ``entropic
prior"~\citep{Brand99} or the Dirichlet prior~\citep{Zivkovic04} for
the mixing weights could produce sparsity of the mixing weights
and hence perform model selection. However, in these methods, how
to select the penalization parameter 
is usually a crucial issue. Moreover, asymptotic properties of these methods have been well 
studied, but less attention was paid to their performance on finite samples. It would be 
very useful if one can find their relationship to the IC-based approach for finite 
samples.


We aim to develop an efficient model selection approach which is
based on the continuous penalized likelihood and approximately
coincides with model selection based on ICs, such
as BIC. We call this approach quick information criterion-like (Quick-IC) model selection. Our contributions are mainly two fold.  First, for regular models, we establish a bridge between the penalized likelihood of ALasso and ICs, and propose to approximate the latter with the former, resulting in convenient model selection; this can also be considered as a way to directly determine the penalization parameter in ALasso to perform IC-like model selection, which avoids the search for the penalization parameter and would save a lot of computation, especially when iterative procedures are needed to find ALasso solutions.  Specifically, 
 in Sec.~\ref{Sec:ALasso}, we give the intuition that the penalty term for each parameter in ALasso 
is closely related to an indicator function
showing if this parameter is active. Consequently, one can approximate
the number of free parameters in ICs in terms of such penalty terms and
find continuous approximators to the ICs. 
This inspires the proposed approach Quick-IC in Sec.~\ref{Sec:Equivalence}, 
which is shown to select exactly the same model
as the corresponding IC does in the case with a diagonal Fisher information matrix. General cases are also briefly discussed. 
The theoretical claims are verified by simulation studies in Section 4. 

Second, in Sec.~\ref{Sec:make_continuous}, we extend Quick-IC to
non-regular and complex models, such as factor analysis, the Gaussian mixture model
and the mixture of factor analyzers~\citep{Hinton97_MFA}, whose
model selection is traditionally very difficult due to the
large candidate model space. By making use of logarithm penalties
with data-dependent weights, we provide continuous approximators
to the ICs suitable for model selection of these
models, and make their model selection easy and efficient. This illustrates the good applicability of the proposed approach.

\section{Relating Adaptive Lasso to Information Criteria: Intuition} \label{Sec:ALasso}

In this section we assume that the model under consideration satisfies
some regularity conditions including identification conditions for
the parameters $\boldsymbol{\theta}$, the consistency of the maximum likelihood estimate
(MLE) $\hat{\boldsymbol{\theta}}$ when the sample size $n$
tends to infinity, and the asymptotic normality of
$\hat{\boldsymbol{\theta}}$. The penalized likelihood can be
written as
\begin{equation} \label{Eq:VS}
pl(\boldsymbol{\theta})=l(\boldsymbol{\theta})-\lambda
p_\lambda(\boldsymbol{\theta}).
\end{equation}
where $l(\boldsymbol{\theta})$ is the log-likelihood,
$\boldsymbol{\theta}$ is the parameter vector,
$p_\lambda(\boldsymbol{\theta})=\sum_ip_\lambda(\theta_i)$ is the
penalty, and $\lambda$ is the penalization parameter. The maximum
penalized
 likelihood estimate is
$\hat{\boldsymbol{\theta}}_{pl}=\arg\max\limits_{\boldsymbol{\theta}}
pl(\boldsymbol{\theta})$. 

$p_\lambda(\boldsymbol{\theta_i})=\sum_i|\theta_i|$ gives the
$\ell_1$-norm penalty. 
The $\ell_1$ penalty produces sparse and continuous estimates~\citep{Tibshirani96}, and
it has been shown to outperform other penalties in
some scenarios~\citep{Ng04}. However, it also 
causes bias in the estimate of significant parameters, and it
could select the true model consistently only when the data
satisfy certain conditions~\citep{Zhao06}. Certain methods, including stability selection with the 
randomized Lasso~\citep{Meinshausen10} and ALasso~\citep{Zou06},
were proposed to overcome such disadvantages of the $\ell_1$
penalty. 

In particular, ALasso uses $p_\lambda(\boldsymbol{\theta}) =
\sum_{i}\hat{w}_i |\theta_i|$, with
$\hat{{w}}_i=1/|\hat{{\theta}}_i|^\gamma$, where $\gamma>0$, and
$\hat{\boldsymbol{\theta}}$ is a (initial) MLE 
of $\boldsymbol{\theta}$. 
Consequently, the strength for penalizing
different parameters depends on the magnitude
of their estimate. Under some regularity
conditions and the condition $\lambda_n/\sqrt{n}\rightarrow 0$ and
$\lambda_n n^{(\gamma-1)/2}\rightarrow \infty$ (the subscript $n$
is used in $\lambda_n$ to indicate the dependence of $\lambda$ on the sample size
$n$), the ALasso estimator is consistent in model
selection. We are more interested in its behavior on finite samples.

The result of ALasso depends on the penalization parameter
$\lambda$. For very simple models, one may use 
least angle regression (LARS,~\cite{Efron04})
to compute the entire solution path, which gives all possible
solutions as $\lambda$ changes. Among these
solutions, the best model can then be selected by cross-validation
or based on some ICs~\citep{Zou07_df}. (The latter 
approach is compared with our approach in Sec.~\ref{Sec:Compare}, and one can see that it may give very different results from the corresponding IC.) However,
for complex models, especially when iterative algorithms are used
to find the solution corresponding to a given $\lambda$, it is 
computationally very demanding and
impractical to find the solution path. One then needs to select
the penalization parameter in advance. In the next section we show that one can simply determine this parameter, while the model selection
result approximately coincides with that based on ICs.

Let us focus on the case $\gamma=1$, meaning that
\begin{equation} \label{Eq:Adap_lasso_here}
p_\lambda(\theta_i) = \hat{w}_i|\theta_i|=
|\theta_i|/|\hat{\theta}_i|.
\end{equation}
After the convergence of
the ALasso procedure, insignificant parameters become zero, and
$p_\lambda(\theta_i)=0$ for such parameters. On the
other hand, 
with suitable $\lambda_n$, the ALasso estimator
$\hat{\theta}_{i,AL}$ is also consistent~\citep{Zou06}; roughly speaking, significant
parameters are then expected to be changed little by the penalty,
when $n$ is not very small. Consequently, at
convergence, $p_\lambda(\theta_i)=
|\hat{\theta}_{i,AL}|/|\hat{\theta}_i| \approx 1$ for significant
parameters. That is, the penalty $p_\lambda(\theta_i)$ {\it
approximately indicates whether the parameter $\theta_i$ is active or not}.
Suppose that the parameters are not redundant.
$\sum_ip_\lambda(\theta_i)$ is then an approximator of the number
of active parameters, denoted by $D$, in the resulting model.

Recall that the traditional ICs whose
minimization enables model selection can be written as
\begin{equation}\label{Eq:IC}
\textrm{IC}_D = -l(\hat{\boldsymbol{\theta}}_{D}) + \lambda_{IC}D.
\end{equation}
The BIC and AIC criteria are
obtained by setting the value of $\lambda_{IC}$ to \footnote{There exist 
useful modifications of these ICs to accommodate different effects; for instance, as extensions of BIC, 
Draper's IC (DIC,~\cite{Draper95}) and extended BIC (EBIC,~\cite{Chen08_EBIC}) improve the performance of BIC in the small sample size case and in the case with very large model spaces, respectively. However, for simplicity, here we take BIC and AIC, which are widely used, as examples.}
\begin{equation} \label{Eq:lambda_ALasso}
\lambda_{BIC}=1/2\cdot \log n, \textrm{~~and~~} \lambda_{AIC}=1,
\end{equation}
respectively.
 Relating (\ref{Eq:IC}) to the penalized likelihood (\ref{Eq:VS}),
one can see that in ALasso, by setting
 $\lambda = \lambda_{IC}$ ($\lambda_{IC}$ may be
 $\lambda_{BIC}$, $\lambda_{AIC}$, etc.), the maximum penalized
 likelihood is closely related to the IC (\ref{Eq:IC}). This will be rigorously studied next, and in fact {\it $\lambda = 2\lambda_{IC}$} (instead of $\lambda = \lambda_{IC}$) {\it gives interesting results}.

\section{Basic Approach for Quick-IC Model Selection} \label{Sec:Equivalence}

\subsection{With a Diagonal Fisher Information Matrix}
Can we make the model selection results of ALasso exactly the same
as those based on the ICs? In fact, if the Fisher
information matrix is diagonal, this can be achieved
by simply setting $\lambda$ in ALasso to $2\lambda_{IC}$, i.e.,
maximizing the following penalized likelihood
\begin{equation} \label{Eq:proposed_PL}
pl(\boldsymbol{\theta}) = l(\boldsymbol{\theta}) -
2\lambda_{IC}\sum_i|\theta_i|/|\hat{\theta}_i|
\end{equation}
selects the same model as the IC
(\ref{Eq:IC}) does, as seen from the following proposition.

\newtheorem{Prop}{Proposition}
\begin{Prop} \label{Prop1}
Suppose that the following conditions hold.
1. The log-likelihood $l(\boldsymbol{\theta})$ is
quadratic around the MLE
$\hat{\boldsymbol{\theta}}$, with a non-singular observed
Fisher information matrix
$\hat{\mathbf{I}}(\hat{\boldsymbol{\theta}})$. 
2. $\hat{\mathbf{I}}(\hat{\boldsymbol{\theta}})$ is diagonal. 
Then
the non-zero parameters selected by maximizing
(\ref{Eq:proposed_PL}) are exactly those selected by minimizing
the IC (\ref{Eq:IC}).
\end{Prop}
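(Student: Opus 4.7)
The plan is to exploit the fact that both objectives — the penalized likelihood and the IC — become fully separable across coordinates once we use the quadratic approximation of $l$ together with the diagonality of $\hat{\mathbf{I}}(\hat{\boldsymbol{\theta}})$. Concretely, I would first write, using Condition 1,
\begin{equation*}
l(\boldsymbol{\theta}) = l(\hat{\boldsymbol{\theta}}) - \tfrac{1}{2}(\boldsymbol{\theta}-\hat{\boldsymbol{\theta}})^\top \hat{\mathbf{I}}(\hat{\boldsymbol{\theta}})(\boldsymbol{\theta}-\hat{\boldsymbol{\theta}}),
\end{equation*}
and then, by Condition 2, reduce this to $l(\hat{\boldsymbol{\theta}}) - \tfrac{1}{2}\sum_i \hat{I}_i(\theta_i-\hat{\theta}_i)^2$, where $\hat{I}_i$ are the diagonal entries. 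This makes the penalized likelihood $pl(\boldsymbol{\theta})$ a sum of per-coordinate terms, so its maximization decouples.

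Next I would solve each coordinate subproblem $\min_{\theta_i}\{\tfrac{1}{2}\hat{I}_i(\theta_i-\hat{\theta}_i)^2 + (2\lambda_{IC}/|\hat{\theta}_i|)|\theta_i|\}$ by standard soft-thresholding, obtaining a closed-form minimizer whose support condition is $\hat{\theta}_{i,AL} \neq 0 \iff \hat{I}_i \hat{\theta}_i^{\,2} > 2\lambda_{IC}$. This is the characterization of which parameters the penalized likelihood declares ``active''.

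In parallel, I would analyze the IC side. For any candidate submodel $M$ indexing a subset of active coordinates, the diagonality of $\hat{\mathbf{I}}$ means that constraining $\theta_i=0$ leaves the MLEs of the other coordinates unchanged (to the order of the quadratic approximation), so the log-likelihood contribution from dropping coordinate $i$ is exactly $\tfrac{1}{2}\hat{I}_i\hat{\theta}_i^{\,2}$. Therefore the IC (\ref{Eq:IC}) decomposes additively across coordinates, and the IC prefers to include coordinate $i$ iff $\tfrac{1}{2}\hat{I}_i\hat{\theta}_i^{\,2} > \lambda_{IC}$, i.e.\ iff $\hat{I}_i\hat{\theta}_i^{\,2} > 2\lambda_{IC}$. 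Matching this with the penalized-likelihood threshold derived above yields identical active sets, proving the claim.

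The main obstacle I anticipate is the IC-side argument, since in general an IC involves a combinatorial search over subsets and a refit of the surviving parameters; the proof hinges on using diagonality to show that this combinatorial problem collapses into independent binary decisions whose ``gain'' is exactly $\tfrac{1}{2}\hat{I}_i\hat{\theta}_i^{\,2}$. The penalized-likelihood soft-thresholding calculation is routine; getting the factor of $2$ exactly right (which motivates the choice $\lambda = 2\lambda_{IC}$ rather than $\lambda_{IC}$) is the small but essential bookkeeping step that links the two sides.
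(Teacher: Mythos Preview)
Your proposal is correct and follows essentially the same route as the paper: both use the quadratic approximation and diagonality to decouple the problem coordinate-wise, derive the soft-thresholding support condition for the penalized likelihood, and match it against the per-coordinate log-likelihood drop that governs the IC's include/exclude decision. The only cosmetic difference is that the paper writes the Hessian as $\mathbf{H}=n\hat{\mathbf{I}}(\hat{\boldsymbol{\theta}})$ and phrases the IC step in the language of optimal-brain-surgeon saliencies $S_i=\tfrac{1}{2}\mathbf{H}_{ii}\hat{\theta}_i^{\,2}$, but the logic and the threshold $\mathbf{H}_{ii}\hat{\theta}_i^{\,2}>2\lambda_{IC}$ are identical to yours.
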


\begin{proof}Since the MLE
$\hat{\boldsymbol{\theta}}$ maximizes $l(\boldsymbol{\theta})$, we
have $\frac{\partial l(\boldsymbol{\theta})}{\partial
\boldsymbol{\theta}}\Big|_{\boldsymbol{\theta} =
\hat{\boldsymbol{\theta}}} = \mathbf{0}$. Let $ \mathbf{H}
\triangleq n\hat{\mathbf{I}}(\hat{\boldsymbol{\theta}})$. Under
the assumptions made in the proposition, the log-likelihood
becomes
\begin{equation} \label{Eq:Like_approx}
l(\boldsymbol{\theta}) =  l(\hat{\boldsymbol{\theta}}) -
\frac{1}{2} (\boldsymbol{\theta} - \hat{\boldsymbol{\theta}})^T
\mathbf{H} (\boldsymbol{\theta} - \hat{\boldsymbol{\theta}}) 
= l(\hat{\boldsymbol{\theta}}) -
\frac{1}{2}\sum_i \mathbf{H}_{ii} ({\theta}_i -
\hat{{\theta}}_i)^2.
\end{equation}
The penalized likelihood (\ref{Eq:proposed_PL}) then becomes
\begin{eqnarray*} 
 pl(\boldsymbol{\theta}) &=& l(\boldsymbol{\theta}) - 
%
%
2 \lambda_{IC}\sum_i(|\theta_i|/|\hat{\theta}_{i}|) \\ 
&=&
l(\hat{\boldsymbol{\theta}}) -
 \frac{1}{2}\sum_{i}\mathbf{H}_{ii}(\theta_i -
 \hat{\theta}_{i})^2 - 2\lambda_{IC}\sum_i(|\theta_i|/|\hat{\theta}_{i}|).
\end{eqnarray*}
It is easy to show that the solutions maximizing
$pl(\boldsymbol{\theta})$ are
\begin{equation} \nonumber
\hat{\theta}_{i,AL} = \textrm{sgn} (\hat{\theta}_{i})\cdot
\big(|\hat{\theta}_{i}| - 2\lambda_{IC} /(\mathbf{H}_{ii}\cdot
|\hat{\theta}_{i}|)\big)^+. 
\end{equation}
That is, $\hat{\theta}_{i,AL}$ estimated by maximizing
(\ref{Eq:proposed_PL}) is non-zero if and only if
$|\hat{\theta}_{i}| - 2\lambda_{IC}\cdot
 /(\mathbf{H}_{ii}\cdot
|\hat{\theta}_{i}|) > 0$, i.e.,
\begin{equation} \label{Eq:criterion_pl}
\mathbf{H}_{ii} \cdot \hat{\theta}_{i}^2 > 2\lambda_{IC}.
\end{equation}

On the other hand, the model selected by minimizing the criterion
(\ref{Eq:IC}) has $D^*$ free parameters if $IC_{D^*} < IC_{{D^*}-1}$ and
$IC_{D^*} \leq IC_{{D^*}+1}$. According to (\ref{Eq:IC}), we then have
\begin{equation} \label{Eq:l_change}
l(\hat{\boldsymbol{\theta}}_{{D^*}+1}) -
l(\hat{\boldsymbol{\theta}}_{{D^*}}) \leq
\lambda_{IC},~\textrm{and~}l(\hat{\boldsymbol{\theta}}_{D^*}) -
l(\hat{\boldsymbol{\theta}}_{D^*-1}) > \lambda_{IC}.
\end{equation}
The least change in $l(\boldsymbol{\theta})$ caused by eliminating
a particular parameter has been derived in the optimal brain
surgeon (OBS) technique~\citep{Hassibi93secondorder}. Here, due to
the simple form of (\ref{Eq:Like_approx}), the least change in
$l(\boldsymbol{\theta})$ caused by eliminating $\theta_i$, denoted
by $S_i$, can be seen directly:
\begin{equation} \label{Eq:S_i}
S_i = \frac{1}{2}\hat{\theta}_i^2/[\mathbf{H}^{-1}]_{ii}  =
\frac{1}{2} [\mathbf{H}]_{ii} \cdot \hat{\theta}_i^2.
\end{equation}
Note that $l(\hat{\boldsymbol{\theta}}_{D^*}) -
l(\hat{\boldsymbol{\theta}}_{D^*-1})$ in (\ref{Eq:l_change}) is
the minimum of $S_i$ for all $D^*$ parameters in the current model.
Therefore, one can see that model selection based on the
IC (\ref{Eq:IC}) selects $\theta_i$ if and
only if $S_i > \lambda_{IC}$, which is equivalent to the
constraint (\ref{Eq:criterion_pl}).
That is, under the assumptions made in the proposition, non-zero
parameters produced by maximizing the penalized likelihood
(\ref{Eq:proposed_PL}) are exactly those selected by the
corresponding IC (\ref{Eq:IC}). $\blacksquare$ \end{proof}

 This proposition indicates that (\ref{Eq:proposed_PL}) can be considered as a continuous
approximator to the ICs (\ref{Eq:IC}), which enables Quick-IC; one can
see that the continuous approximator is obtained by simply
replacing the maximum likelihood $l(\hat{\boldsymbol{\theta}}_D)$ by the data
likelihood $l(\boldsymbol{\theta})$, and replacing the number of free parameters, $D$, by
$2\sum_i|\theta_i|/|\hat{\theta}_i|$.

\subsection{More General Case}
The condition in Proposition~\ref{Prop1} is rather restrictive; in the 
linear regression scenario, it corresponds to the orthogonal design case. In the more 
general case, where $\hat{\mathbf{I}}(\hat{\boldsymbol{\theta}})$ is usually not
diagonal, 
the condition for the parameters $\theta_i$ to be
selected by maximizing (\ref{Eq:proposed_PL}) becomes more complex, and (\ref{Eq:IC}) and
(\ref{Eq:proposed_PL}) are usually not exactly equivalent. We give some results on the relationship between Quick-IC 
and model selection based on ICs.

\begin{Prop} \label{Prop2}
Suppose that condition 1 in Proposition~\ref{Prop1} holds. Assume that both the IC approach (\ref{Eq:IC}) and Quick-IC 
(\ref{Eq:proposed_PL}) perform model selection in the backword elimination manner, 
i.e., the penalization parameter is gradually 
increased to the target value, such that insignificant parameters are set to zero one by one. 
Further assume that once a parameter is set to zero, it will not become non-zero again. Let $\widetilde{\mathbf{H}} \triangleq \textrm{diag}(\hat{\boldsymbol{\theta}})\cdot
\mathbf{H}\cdot\textrm{diag}(\hat{\boldsymbol{\theta}})$, where $\mathbf{H} = n\hat{\mathbf{I}}(\boldsymbol{\theta})$ and is assumed to be nonsingular.
Then the IC approach (\ref{Eq:IC}) selects
$\theta_i$  if and only if $[\widetilde{\mathbf{H}}^{-1}]_{ii} < \frac{1}{2\lambda_{IC}}$, while 
Quick-IC (\ref{Eq:proposed_PL}) does so if and only if $[\widetilde{\mathbf{H}}^{-1}]_{i\cdot}\mathbb{I} <
\frac{1}{2\lambda_{IC}}$, where $[\widetilde{\mathbf{H}}^{-1}]_{i\cdot}$ donotes the $i$th row of 
$\widetilde{\mathbf{H}}^{-1}$ and $\mathbb{I}$ is the vector of 1's.
\end{Prop}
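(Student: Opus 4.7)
The plan is to reduce both approaches to a common quadratic-plus-$\ell_1$ problem via the rescaling $\mathbf{u} \triangleq \text{diag}(\hat{\boldsymbol{\theta}})^{-1}\boldsymbol{\theta}$, so that $\hat{\mathbf{u}} = \mathbf{1}$. Under condition 1 the quadratic expansion (\ref{Eq:Like_approx}) becomes
\[
l(\boldsymbol{\theta}) = l(\hat{\boldsymbol{\theta}}) - \tfrac{1}{2}(\mathbf{u}-\mathbf{1})^T \widetilde{\mathbf{H}}(\mathbf{u}-\mathbf{1}),
\]
and the ALasso penalty reduces to $\sum_i|\theta_i|/|\hat{\theta}_i| = \|\mathbf{u}\|_1$. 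Both procedures therefore depend on the data only through $\widetilde{\mathbf{H}}$, which is why the condition can be expressed purely in terms of $\widetilde{\mathbf{H}}^{-1}$.

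For the IC side I would reuse the OBS identity (\ref{Eq:S_i}) from the proof of Proposition~\ref{Prop1}, which remains valid for non-diagonal $\mathbf{H}$: the least decrease in $l$ caused by forcing $\theta_i = 0$ and re-maximizing over the remaining parameters is $S_i = \tfrac{1}{2}\hat{\theta}_i^2/[\mathbf{H}^{-1}]_{ii}$. The block-rescaling identity $[\widetilde{\mathbf{H}}^{-1}]_{ii} = [\mathbf{H}^{-1}]_{ii}/\hat{\theta}_i^2$ converts this into $S_i = 1/(2[\widetilde{\mathbf{H}}^{-1}]_{ii})$. Under the backward-elimination hypothesis, $\theta_i$ is kept iff eliminating it would cost more than the IC penalty $\lambda_{IC}$, i.e.\ iff $S_i > \lambda_{IC}$, which is exactly $[\widetilde{\mathbf{H}}^{-1}]_{ii} < 1/(2\lambda_{IC})$.

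For the Quick-IC side I would trace the solution path of $\min_{\mathbf{u}} \tfrac{1}{2}(\mathbf{u}-\mathbf{1})^T\widetilde{\mathbf{H}}(\mathbf{u}-\mathbf{1}) + 2\lambda\|\mathbf{u}\|_1$ as $\lambda$ grows from $0$. Since $\mathbf{u}(0)=\mathbf{1}>\mathbf{0}$ and the monotonicity hypothesis rules out sign flips or reactivations, every active coordinate has subgradient $+1$. The stationarity condition on the active set therefore reads $\widetilde{\mathbf{H}}(\mathbf{u}-\mathbf{1}) = -2\lambda\,\mathbb{I}$, giving $\mathbf{u}=\mathbf{1}-2\lambda\widetilde{\mathbf{H}}^{-1}\mathbb{I}$ and component-wise $u_i = 1 - 2\lambda\,[\widetilde{\mathbf{H}}^{-1}]_{i\cdot}\mathbb{I}$. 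The first-crossing time of $u_i$ is $\lambda_i^{\star} = 1/(2[\widetilde{\mathbf{H}}^{-1}]_{i\cdot}\mathbb{I})$, so by no-reactivation, $u_i > 0$ at $\lambda = \lambda_{IC}$ iff $\lambda_{IC} < \lambda_i^{\star}$, i.e.\ iff $[\widetilde{\mathbf{H}}^{-1}]_{i\cdot}\mathbb{I} < 1/(2\lambda_{IC})$.

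The main obstacle is justifying that the full-matrix quantities $[\widetilde{\mathbf{H}}^{-1}]_{ii}$ and $[\widetilde{\mathbf{H}}^{-1}]_{i\cdot}\mathbb{I}$ continue to control membership after one or more coordinates have been deactivated: strictly speaking, the restricted problems involve the Schur complements of $\widetilde{\mathbf{H}}$ and reduced row sums, so the thresholds seen by the remaining parameters shift. The backward-elimination and no-reactivation assumptions are precisely what pins down the elimination order from the full-model snapshot; under these hypotheses one can argue inductively that the first coordinate to cross in the reduced problem is the same one identified by the corresponding full-matrix criterion, so that applying the stated inequality to each index in turn reproduces the entire selected set. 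Verifying this reduction carefully (and handling possible ties at crossing times) is the only delicate part; the algebra itself follows directly from the KKT/OBS calculations above. $\blacksquare$
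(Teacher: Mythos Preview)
Your proof is essentially identical to the paper's: the same rescaling $\tilde\theta_i=\theta_i/\hat\theta_i$ (your $u_i$), the same OBS identity for the IC side, and the same first-order stationarity computation $\tilde{\boldsymbol\theta}=\mathbb{I}-\lambda\widetilde{\mathbf H}^{-1}\mathbb{I}$ for the Quick-IC side, yielding the same thresholds. The obstacle you flag about Schur complements after deactivation is real, but the paper's proof glosses over it in exactly the same way---it applies the full-matrix $\widetilde{\mathbf H}^{-1}$ throughout and relies on the backward-elimination/no-reactivation hypotheses to justify reading off the selected set from the full-model snapshot; your account is, if anything, more candid about this informal step.
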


\begin{proof} From the proof of Proposition~\ref{Prop1} or~\cite{Hassibi93secondorder}, one can see that the IC
approach selects $\theta_i$ if and only if 
$S_i = \frac{1}{2}\hat{\theta}_i^2/{[\mathbf{H}^{-1}]_{ii}} > \lambda_{IC}$, which is equivalent to 
$[\widetilde{\mathbf{H}}^{-1}]_{ii} <
\frac{1}{2\lambda_{IC}}$. Let $\tilde{{\theta}}_i \triangleq {{\theta}}_i/\hat{{\theta}}_i $. 
On 
the other hand,  due to Condition 1 in Proposition~\ref{Prop1}, the penalized likelihood with the penalization 
parameter $\lambda$ is
\begin{eqnarray} \nonumber
J &=& l(\boldsymbol{\theta})-\lambda\sum_i|\theta_i|/|\hat{\theta}_i| \\ \nonumber
&=& l(\hat{\boldsymbol{\theta}}) - 
\frac{1}{2} (\boldsymbol{\theta} - \hat{\boldsymbol{\theta}})^T
\mathbf{H} (\boldsymbol{\theta} - \hat{\boldsymbol{\theta}}) -\lambda\sum_i|\theta_i|/|\hat{\theta}_i| \\ \nonumber
&=&  l(\hat{\boldsymbol{\theta}}) - \frac{1}{2} (\tilde{\boldsymbol{\theta}} - \mathbb{I})^T
\widetilde{\mathbf{H}} (\tilde{\boldsymbol{\theta}} - \mathbb{I}) - \lambda \sum_i \tilde{\theta}_i.
\end{eqnarray}
Clearly, if $\lambda$ is very small such that none of $\theta_i$ 
is set to zero, $J$
 is maximized when $\frac{\partial J}{\partial \tilde{\boldsymbol{\theta}}} = \mathbf{0}$, i.e., 
$$\widetilde{\mathbf{H}}(\mathbb{I} - \tilde{\boldsymbol{\theta}})
-\lambda \mathbb{I} = \mathbf{0},$$ 
which is equivalent to $\mathbb{I} - \tilde{\boldsymbol{\theta}} = 
\lambda \widetilde{\mathbf{H}}^{-1}\mathbb{I}$. Consequently, we have 
$$\tilde{\theta}_i = 1 - \lambda [\widetilde{\mathbf{H}}^{-1}]_{i\cdot}\mathbb{I}.$$ 
When $\lambda$ is gradually increased such that 
$\lambda [\widetilde{\mathbf{H}}^{-1}]_{j.}\mathbb{I} = 1$, $\tilde{\theta}_j$, 
or equivalently ${\theta}_j$, is set to zero. Finally, when $\lambda$ is increased to $2\lambda_{IC}$, 
the non-zero parameters $\theta_i$ selected by Quick-IC satisfy $[\widetilde{\mathbf{H}}^{-1}]_{i\cdot}\mathbb{I} < \frac{1}{2\lambda_{IC}}$. $\blacksquare$
\end{proof}

Although in practice one may not adopt backword elimination, the above proposition helps us understand the similarity and difference between 
the IC approach and Quick-IC. For example, if $\sum_{j\neq i} [{\mathbf{H}}^{-1}]_{ij} = 0$ for all $i$ (which includes 
Proposition~\ref{Prop1} as a special case), the two approaches
give the same results. Of course, for finite samples, in the general case it is theoretically impossible to make 
parameter shrinkage-based 
Quick-IC exactly identical to the IC approach. However, their empirical comparisons in various situations presented 
in Sec.~\ref{Sec:Compare} suggest that they usually give the
same model selection results for various sample sizes. 
We give the following remarks on the proposed model selection
approach. Firstly, the result of the proposed approach depends on
$\hat{\boldsymbol{\theta}}$. When the model is very large,
$\hat{\boldsymbol{\theta}}$ may be too rough, and it is useful to
update $\hat{\boldsymbol{\theta}}$ using a consistent estimator
sometime when a smaller model is derived.\footnote{Note that this
is different from the reweighted $\ell_1$ minimization methods (see, e.g.,~\cite{Candas08}). In the reweighted
methods, in each iteration the penalized estimate given in the
previous iteration is used to form the new weight; in this way,
the reweighted ALasso penalty provides an approximator to the
logarithm penalty, since $\log(\theta)$ can be locally
approximated by $|\theta|/|\theta_0|$ plus some constant, about
point $\theta_0$.} Secondly, in Sec.~\ref{Sec:make_continuous} the idea of Quick-IC is further applied to more complex 
models, 
by using data-dependent weights
for suitable penalization functions and approximating the number of effective parameters. For example,
in some cases one needs to resort to the logarithm penalty to
produce sparsity of parameters, and we suggest using the
corresponding data-adaptive penalty
$\hat{w}_i\log(\frac{|\theta_i|+\epsilon}{\epsilon})$ with
$\hat{w}_i = 1/\log(\frac{|\hat{\theta}_i|+\epsilon}{\epsilon})$,
where $\epsilon$ is a very small positive number, as the penalty
term, as discussed in Section~\ref{Sec:make_continuous_MML}.
Correspondingly, to obtain the continuous approximator of the
ICs, one just simply replaces the number of
effective parameters in $\theta_i$ with $2 \sum_i
\hat{w}_i\log(\frac{|\theta_i|+\epsilon}{\epsilon})$.

\section{Numerical Studies}\label{Sec:Compare}

The proposed approach in Section~\ref{Sec:Equivalence} directly
applies to model selection of simple models such as regression and
vector auto-regression (VAR). VAR provides a convenient way for
Granger causality analysis~\citep{Granger80}, and has a lot of
applications in economics, neuroscience, etc. Unfortunately it
usually involves quite a large number of parameters, making
the IC approach impractical, while Quick-IC 
gives efficient model selection.

In this section we use simulations to investigate the performance of Quick-IC. To 
verify the results in Sec.~\ref{Sec:Equivalence}, we consider the simple linear regression problem 
$y = \boldsymbol{\theta}^T \mathbf{x} + \epsilon$, where $y$ is the target, $\mathbf{x} = (x_1,...,x_p)^T$ contains predictors, 
and $\epsilon$ is the Gaussian noise.
We take BIC as an example, i.e., we compare BIC-like Quick-IC 
(or Quick-BIC, with $\lambda_{IC} = \lambda_{BIC}$ in (\ref{Eq:proposed_PL})) with the original BIC (\ref{Eq:IC}). 
We also compare them with the approach of ALasso followed by the BIC criterion (ALasso+BIC): 
one first finds the solution path of ALasso using LARS, and then selects the ``best" model by evaluating the BIC criterion with the maximum 
likelihood $l(\hat{\boldsymbol{\theta}}_D)$ replaced by the likelihood of the parameter values on the solution 
path (\cite{Zou07_df}, Sec. 4). For this reason, ALasso+BIC is different from BIC. In Quick-BIC, the noise 
variance was estimated from the full model. For BIC, we searched the prediction number between 4 and 8.

20 predictors $x_i$ were used, i.e., $p=20$. 14 entries of $\boldsymbol{\theta}$ were set to zero. The magnitudes of the others were randomly 
chosen between 0.2 and 2.5, and the signs were arbitrary. We considered three cases. Case I corresponded to an 
orthogonal design, i.e., all predictors are uncorrelated. In Case II, the pairwise correlation between $x_i$ 
and $x_j$ was set to be $0.5^{|i-j|}$. In the last case, the covariance matrix of $\mathbf{x}$ was randomly generated 
as $\mathbf{M}\mathbf{M}^T$ with entries of the square matrix $\mathbf{M}$ randomly sampled between $-0.5$ and $0.5$. In all cases we 
normalized the variance of each $x_i$. The noise variance was $0.5$. To see the sample size effect, we varied the sample size $n$ from 
100 to 300. The simulation was repeated for 100 random trials.

Table~\ref{tab1} reports the frequency of the differences in the selected predictor numbers given by different 
methods. One can see that in Case I, all the three methods almost always select the same number of 
predictors. In Cases II and III, Quick-BIC still gives rather similar results to BIC; in particular, as 
the sample size increases, their results tend to agree with each other quickly.  
ALasso+BIC 
produces different models with a surprisingly noteworthy chance for both sample sizes, especially in Case III. 
However, it seems to be still statistically consistent in model selection, like BIC; we found that when 
$n=600$, for 56 times it gave the same model as BIC. As for the computational loads, BIC took more than 
550 times longer than Quick-BIC as well as ALasso+BIC.


\begin{small}\begin{table*}[htbp]
\centering
\caption{Occurrence frequency of the differences in the numbers of selected predictors by different methods for 100 random trials. ``$<$" and ``$>$" mean ``$<-1$" and ``$>1$", respectively; $D_{BIC}^*$, $D_{Quick-BIC}^*$, and $D_{ALasso+BIC}^*$ denote 
the numbers of predictors produced by BIC, Quick-BIC, and ALasso+BIC, respectively.} \vspace{0.2cm}
\label{tab1}
\begin{tabular}{|c|c||c|c|c|c|c||c|c|c|c|c||c|c|c|c|c|}
\hline
\multirow{2}{0.5cm}{\small Case} & \multirow{2}{0.25cm}{\small$T$} & \multicolumn{5}{c}{\small$D_{BIC}^*-D_{Quick-BIC}^*$} \vline\vline   
& \multicolumn{5}{c}{\small$D_{BIC}^*-D_{ALasso+BIC}^*$} \vline\vline    & \multicolumn{5}{c}{\begin{small}$D_{Quick-BIC}^* - D_{ALasso+BIC}^*$\end{small}} \vline  \\
  \cline{3-17}
  &  & \small-2 & \small-1 & \small \bf0 & \small1 & \small2  & \small$<$ &\small-1& \small \bf0 &\small1 &\small$>$  &\small$<$& \small-1 &\small \bf0 &\small1 &\small$>$\\
   \hline \hline
\multirow{2}{0.5cm}{\small I}  & \small100  & \small0 & \small4 & \small \bf96 & \small0 & \small0 & \small0& \small3 &   \small \bf97 & \small0 & \small0 & \small0& \small0& \small \bf99 & \small1& \small0\\
 &\small300  & \small0 & \small1& \small \bf99 & \small0 & \small0 & \small0& \small0& \small \bf100 &\small0& \small0&  \small0& \small0& \small \bf99 & \small1& \small0\\ \hline \hline
\multirow{2}{0.5cm}{\small II} & \small100  & \small1 & \small5 & \small \bf92 & \small2 & \small0 & \small2& \small3&    \small \bf73 & \small15& \small7 & \small2& \small1& \small \bf74& \small16& \small7\\
 &\small300  & \small0 & \small0& \small \bf100 &\small0 & \small0 & \small0 & \small0& \small \bf89 &\small11& \small0 & \small0& \small0& \small \bf89 & \small11& \small0\\ \hline \hline
\multirow{2}{0.5cm}{\small III}& \small100  & \small3 & \small12  &\small \bf67 & \small14& \small4 & \small20& \small29& \small \bf33 & \small13& \small5 & \small21 & \small26& \small \bf37 &\small11& \small5\\
 &\small300  & \small0 & \small6& \small \bf87 & \small7 & \small0 & \small 27& \small29& \small \bf42 &\small2& \small0 & \small 27& \small32 & \small \bf38& \small3 & \small0\\
\hline
\end{tabular}
\end{table*}\end{small}


\section{Extensions: Quick-IC by Approximating Various Information Criteria}\label{Sec:make_continuous}

Below we focus on other frequently-used statistical models,
especially some complex ones, and give continuous approximators to
the ICs for their model selection by extending Quick-IC. 
We also give empirical results 
to illustrate the applicability and efficiency of Quick-IC.

\subsection{General Framework with Grouped Parameters}

For regular statistical models, under a set of regularity
conditions, the asymptotic normality of $\hat{\boldsymbol\theta}$
holds. The $\ell_1$ penalty used in Lasso can then produce
sparsity of the parameters and hence perform model
selection~\cite{Tibshirani96}. 
 The
asymptotic properties of the variable selection techniques
established in the linear regression scenario also approximately
hold for regular models. For some non-regular models, it is still
possible to do so. If the gradient of the log-likelihood changes
slowly around $\hat{\boldsymbol\theta}$, these penalties will
successfully push insignificant parameters to zero. Otherwise, one
may apply penalization on suitable transformations of the
parameters, instead of the original parameters.

In practice, 
the parameters in a model often naturally belong to groups, 
i.e., they are selected or discarded 
simultaneously~\citep{Yuan06_grouped,Bach08}. One can
formalize this by introducing functions $\mathcal{T}_i$ which
allow computation of the penalties for groups of variables.
Generally speaking, the information criterion of the form
(\ref{Eq:IC}) can be approximated by the negative penalized
likelihood:
\begin{equation} \label{Eq:npl}
npl(\boldsymbol{\theta}) =- l(\boldsymbol{\theta}) +
2\lambda_{IC}\cdot\sum_i D_{fi}\cdot
p_\lambda(\mathcal{T}_i(\boldsymbol{\theta})),
\end{equation}
where $\mathcal{T}_i(\boldsymbol{\theta})$ are suitable
transformations of the parameters (or selected parameters)
controlling the complexity of the model, and $D_{fi}$ are the
numbers of independent parameters associated with the group
$\mathcal{T}_i(\boldsymbol{\theta})$. Minimization of the negative
penalized likelihood (\ref{Eq:npl}) enables simultaneous model
selection and parameter estimation. When a particular
$\mathcal{T}_i(\boldsymbol{\theta})$ is pushed to zero, $D_{fi}$
free parameters disappear, and the model
complexity is reduced. How to choose
$\mathcal{T}_i(\boldsymbol{\theta})$ and to calculate $D_{fi}$
depends on the specific model.

%
%

\subsection{Quick BIC-Like Model Selection for Factor Analysis} \label{Sec:FA}

Let us first consider model selection of the factor analysis (FA)
model. In FA, the observed $d$-dimensional data vector
$\mathbf{x}=(x_1,\cdots,x_d)^T$ is modeled as $
\mathbf{x} = \mathbf{Ay} + \mathbf{e}$,
where $\mathbf{A}$ is the factor loading matrix, $\mathbf{y} =
(y_1,\cdots,y_k)^T$ the vector of $k$ underlying Gaussian factors,
and $\mathbf{e}=(e_1,\cdots,e_d)^T$ the vector of uncorrelated
Gaussian errors with the covariance matrix
$\boldsymbol{\Psi}=\textrm{diag} (\psi_1,\cdots,\psi_d)$. The
factors $\mathbf{y}$ and the errors $\mathbf{e}$ are also mutually
independent. Here, we have assumed that $\mathbf{x}$ is zero-mean
and that the factors $\mathbf{y}$ are normally distributed with
zero mean and identity covariance matrix.

Given the factor number $k$ and a set of observations
$\{\mathbf{x}_t\}_{t=1}^n$, the FA model can be fitted by
maximum likelihood (ML) using the expectation-maximization (EM)
algorithm~\citep{Rubin82,Ghahramani97_MFA}. But ML estimation
could not determine the optimal factor number $k^*$, since the
ML does not consider the complexity of the model and it
increases as $k$ grows. 

A suitable factor number gives the FA model enough capacity and
avoids over-fitting. When the unconditional variances of 
$y_i$ are fixed, model selection of FA can be achieved by
shrinking suitable columns of $\mathbf{A}$ to
zero. So entries in each column of $\mathbf{A}$ are grouped. 
Denote by $\mathbf{A}_{(i)}$ the $i$th column of $\mathbf{A}$.
Note that $|| \mathbf{A}_{(i)} ||$ is singular at
$\mathbf{A}_{(i)} = \mathbf{0}$, so penalization on $||
\mathbf{A}_{(i)} ||$ can remove unnecessary columns in
$\mathbf{A}$ and consequently perform model selection. The
negative penalized likelihood for approximating BIC is
\begin{equation} \label{Eq:PLL_FA} npl^{FA} =
-l^{FA}  + 2D_{f}\cdot \lambda_{BIC}
\sum_{i=1}^kp_\lambda(||\mathbf{A}_{(i)}||),
\end{equation} where $D_{f}$ denotes the number
of free parameters in the column of $\mathbf{A}$ which is to be
removed, and $\lambda_{BIC}$ is
given in (\ref{Eq:lambda_ALasso}). Due to the
rotation indeterminacies of the factors $y_i$, the total number
of free parameters in
$\mathbf{A}$ is $dk - \frac{k(k-1)}{2}$. 
The proposed method removes columns of $\mathbf{A}$ one by one.
If one insignificant column of $\mathbf{A}$ is shrinked to zero,
the total number of free parameters in $\mathbf{A}$ reduces from
$dk - \frac{k(k-1)}{2}$ to $d(k-1) - \frac{(k-1)(k-2)}{2}$.
Therefore, $D_f$ can be
evaluated to equal 
$d-k+1$, as the change of the number of free parameters in
$\mathbf{A}$ when a certain column disappears. Once a column of
$\mathbf{A}$ is removed, $k$ is updated accordingly.

The EM algorithm for minimizing the negative penalized likelihood
(\ref{Eq:PLL_FA}) can be derived analogously to the derivation
of that for the FA model~\citep{Ghahramani97_MFA}.
Following~\cite{Fan01}, we use the local quadratic approximation
(LQA) to approximate the penalties
$p_\lambda(||\mathbf{A}_{(i)}||)$. As a great advantage, it admits
a closed-form solution for $\mathbf{A}$ in the M step. 

We would like to address the following advantages of adopting the
negative penalized likelihood based on ALasso, instead of the
original BIC criterion, for model selection. 1. The negative
penalized likelihood is easy to minimize. 2. If the log-likelihood
function is concave in the neighborhood of the maximum likelihood
estimator (like in the linear regression problem), the negative
penalized likelihood is {\it convex}, and its minimization does
not suffer from multiple local minima.


%

%

\subsection{Quick MML-Like Model Selection for Gaussian Mixture Model} \label{Sec:make_continuous_MML}
The Gaussian mixture model (GMM) models the density of the
$d$-dimensional variable $\mathbf{x}$ as a weighted sum of some
Gaussian densities:
$f(\mathbf{x}) = \sum_{i=1}^m\pi_i
\phi(\mathbf{x};\boldsymbol{\mu}_i, \boldsymbol\Sigma_i)$,
where $\phi(\mathbf{x};\boldsymbol{\mu}_i, \boldsymbol\Sigma_i)$ are
Gaussian densities with mean $\boldsymbol{\mu}_i$ and covariance
matrix $\boldsymbol\Sigma_i$, and $\pi_i$ are nonnegative weights
that sum to one.

BIC is not suitable for model selection of mixture models, since
not all data are effective for estimating the
parameters specifying an individual component. 
Instead, the MML-based model selection criterion is
preferred~\citep{Figueiredo02}. The message length to be minimized
for model selection of GMM is
\begin{eqnarray} \label{Eq:MML_GMM}
\mathcal{L}^{GM}_{MML} = \frac{D_f^{GM}}{2}\sum_{i:\pi_i>0}
\log\Big(\frac{n\pi_i}{12}\Big)+
\frac{m_{nz}}{2}\log\Big(\frac{n}{12}\Big) 
+ \frac{m_{nz}(D_f^{GM}+1)}{2} - l^{GM},
\end{eqnarray}
where $m_{nz}$ denotes the number of non-zero-probability
components, and the number of free parameters in each component is
$D_f^{GM}=d+d(d+1)/2 = d^2/2+3d/2$. Minimization of the above
function is troublesome since it involves the discrete variable
$m_{nz}$. Below we develop an approximator to (\ref{Eq:MML_GMM})
which is continuous in $\pi_i$.

GMM is a typical non-regular statistical model. The expected
complete-data log likelihood of GMM (see~\cite{McLachlan00} for
its formulation), which gives an approximation of the true data
likelihood, involves $\log \pi_i$. Hence, its gradient w.r.t.
$\pi_i$ grows very fast when $\pi_i \rightarrow 0$. Consequently,
the $\ell_1$ penalty could not push insignificant $\pi_i$ to zero.
Fortunately, one can then naturally exploit the $\log$ penalty to produce sparsity
of $\pi_i$. The
$\log$ penalty on $\pi_i$ also has the advantage of admitting a
closed-form update equation for $\pi_i$ in the EM algorithm. To
avoid the discontinuity of the objective function when a component
with $\pi_i\rightarrow 0$ vanishes, we use
$\log(\frac{\varepsilon+\pi_i}{\varepsilon}) = \log(\varepsilon +
\pi_i) - \log(\varepsilon)$ as the penalty, where $\varepsilon$ is a
small enough positive number (we chose $10^{-3}$ in experiments).
Let $\hat{w}_i \triangleq 1/\log[(\varepsilon +
\hat{\pi}_i)/\varepsilon]$. Inspired by the idea of adaptive weights
in ALasso, we can let the penalty term be
$\hat{w}_i\log[(\pi_i+\epsilon)/\epsilon]$. $m_{nz}$ could then be
approximately by 2$\sum_{i=1}^m \hat{w}_i{\log[(\varepsilon +
\pi_i)/ \varepsilon]}$. Consequently, (\ref{Eq:MML_GMM}) is
approximated by
\begin{eqnarray} \label{Eq:MML_GMM_continuous} 
npl^{GM}_{MML} = \frac{D_f^{GM}}{2}\sum_{i:\pi_i>0}
\log\Big(\frac{n\pi_i}{12}\Big) +\Big[ \log\Big(\frac{n}{12}\Big)
+ D_f^{GM}+1
\Big]\sum_{i=1}^m\hat{w}_i\log\Big(\frac{\pi_i+\epsilon}{\epsilon}
\Big) - l^{GM}.
\end{eqnarray}

The EM algorithm for minimizing the function above is the same as
that for maximizing the GMM likelihood, except that the update
equation for $\pi_i$ is changed to
\begin{eqnarray} \nonumber
\pi_i = \max\Big\{0, \frac{ \sum_{t=1}^n h_{it} -
0.5D_f^{GM} - 0.5\hat{w}_i  [\log(n/12) +
D_f^{GM}+1]}{n-0.5mD_f^{GM}- 0.5[ \log(n/12) +
D_f^{GM}+1]\sum_{j=1}^m\hat{w}_j} \Big\},
\end{eqnarray}
where $h_{it}$ denotes the posterior probability that the $t$th
point comes from the $i$th component. When $\pi_i$ becomes very
small, say smaller than $1/n$, we drop the $i$-th component. In
practice, if the initialized model is very far from the desired
one, as the model complexity reduces, it is better to occasionally
update $\hat{w}_i$ with the corresponding maximum likelihood
estimator.

\subsection{Quick MML-Like Model Selection for Mixture of Factor Analyzers}

Now consider the mixture of factor analyzers
(MFA, \cite{Hinton97_MFA}), which has a lot of applications in pattern recognition. 
 It assumes that the 
$d$-dimensional observations $\mathbf{x}$ can be modeled as
$\mathbf{x} = \boldsymbol{\mu}_i + \mathbf{A}_i\mathbf{y}_i +
\mathbf{e}_i\textrm{~with
probability~}\pi_i\textrm{~}(i=1,\cdots, m)$,
where $\boldsymbol{\mu}_i$ is the mean of the $i$th factor
analyzer, and local factors $\mathbf{y}_i$, which follow $
\mathcal{N}(\mathbf{0},\mathbf{I}_{k_i})$, are independent from
$\mathbf{e}_i$, which follow
$\mathcal{N}(\mathbf{0},\boldsymbol{\Psi})$ with
$\boldsymbol{\Psi}=\textrm{diag} (\psi_i,\cdots,\psi_d)$. The
factor number $k_i$ may vary for different $i$.

Following~\cite{Figueiredo02}, one can find the message length
for MFA (with some constant terms dropped):
\begin{eqnarray} \label{Eq:MML_MFA} 
\mathcal{L}^{MFA}_{MML} &=& \frac{1}{2}\sum_{i:\pi_i>0}\Big[
D_{fi}^{F}\cdot \log\Big(\frac{n\pi_i}{12}\Big)\Big] + 
\frac{1}{2}\sum_{i=1}^{m_{nz}}D_{fi}^{F}  
+
\frac{m_{nz}}{2}\Big[\log\Big(\frac{n}{12}\Big)+1\Big] - l^{MFA},
\end{eqnarray}
where $D_{fi}^{F}$ denotes the number of free parameters specifying
the $i$-th factor analyzer, i.e., $D_{fi}^{F} = d
+dk_i-k_i(k_i-1)/2$. This function involves integers $m_{nz}$ (the
number of factor analyzers) and $k_i$, $i=1,\cdots,m_{nz}$, (the
number of factors in each factor analyzer). Its optimization is
computationally highly demanding due to the large search space of
$(m_{nz}, {k_i}_{i=1}^{m_{nz}} )$. 
 Using the
$\ell_1$ and $\log$ penalties with data-adaptive weights, we can
approximate $\mathcal{L}^{MFA}_{MML}$ with the following function:
\begin{eqnarray} \nonumber  npl_{MML}^{MFA} &=&
\frac{1}{2}\sum_{i:\pi_i>0}\hat{D}_{fi}^F\log\Big(\frac{n\pi_i}{12}
\Big) +
\sum_{i=1}^m\hat{w}_i\log\Big(\frac{\pi_i+\epsilon}{\epsilon}\Big)
\\ \label{Eq:npl_MML_MFA} &&\cdot \Big[ \hat{D}_{fi}^F + \log\Big( \frac{n}{12} \Big)+1  \Big]
-l^{MFA},
\end{eqnarray}
where $\hat{D}_{fi}^F = \hat{N}_f(\mathbf{A}_i) + d$ and
$\hat{N}_f(\mathbf{A}_i)$ is the ALasso-based approximator to the
number of free parameters in $\mathbf{A}_i$. After some
derivations, one can see that a reasonable approximator is
$\hat{N}_f(\mathbf{A}_i) = \big[dk_i-\frac{k_i(k_i-1)}{2}\big] -
D_f \sum_{j=1}^{k_i} \Big(1 - p_\lambda(||\mathbf{A}_{i,(j)}||)
\Big) = \frac{k_i(k_i-1)}{2} + (d-k_i+1)\sum_{j=1}^{k_i}
p_\lambda(||\mathbf{A}_{i,(j)}||)$. One can verify that
$\hat{N}_f(\mathbf{A}_i)$ changes very slightly when $k_i$ is
reduced by shrinking columns of $\mathbf{A}_{i}$.
Similar to~\cite{Ghahramani97_MFA}, one can derive the EM
algorithm for minimizing (\ref{Eq:npl_MML_MFA}).

We note that the proposed model selection methods for GMM and MFA
generate new components or split any large component. For very
complex problems, they may converge to local optima. If necessary,
one can perform the split and merge operations~\citep{Ueda99} after
certain EM iterations to improve the final results.
\subsection{Experiments}
\subsubsection{Factor Analysis}
We generated the data according to the FA model, and compared four
model selection schemes, which are BIC-like Quick-IC given in Section~\ref{Sec:FA} (or Quick-BIC),
BIC, AIC, and fivefold cross-validation (CV). The true factor
number was $k = 5$, and the data dimension was $d=10$.
Elements in $\mathbf{A}$ were randomly
generated between $-1.5$ and $1.5$, and the error variances
$\psi_i$ were random numbers between 0 and 1. When using BIC, AIC,
or CV, we let $k_{min}=3$ and $k_{max} = 7$, while Quick-BIC was
initialized with 8 factors. To investigate the sample size effect,
we let the sample size $n$ be 40 and $100$. In each case, we
repeated all methods for 100 random trials.

When $n=40$, BIC, AIC, Quick-BIC, and CV approximately took 4, 4,
1.5, and 20 seconds, respectively, for each trial. Clearly
Quick-BIC is most computationally appealing, as expected.
Fig.~\ref{fig:FA_EXP} plots the histogram of the factor numbers
found by the four methods. When $n=40$, BIC (as well as
Quick-BIC) seems to over-penalize the model complexity and
results in a smaller factor
number. 
But when $n$ is increased to 100, its performance becomes almost
the best. On the contrary, AIC seems to under-penalize the
complexity. In both cases, Quick-BIC is always similar to BIC. Also considering its light
computational load, Quick-BIC is preferred among the four
methods. We also tested the case $n=200$, and found that
Quick-BIC and BIC give clearly the best results.

\begin{figure}[htp]
\centering
\includegraphics[width=5.2in,height=1.0in]{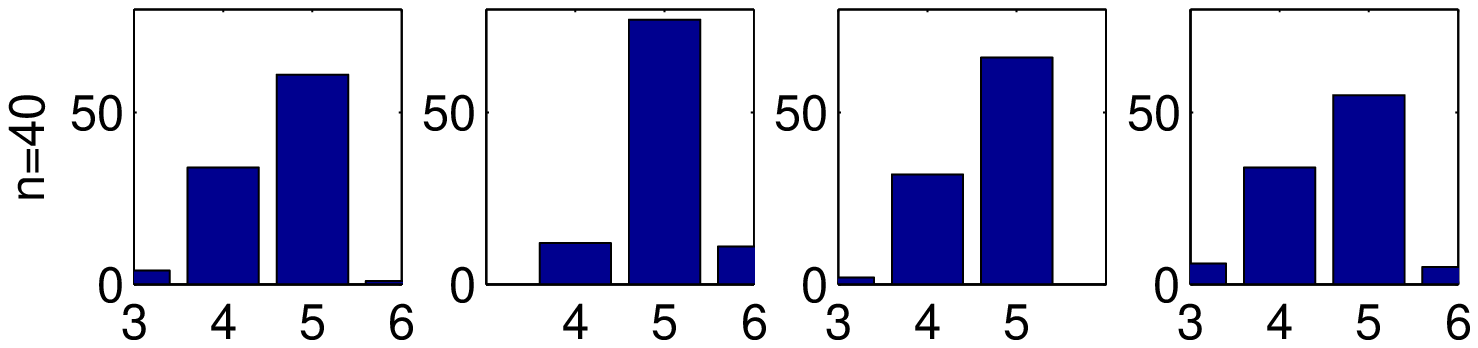}\\
~~~~~(a) BIC~~~~~~~~~~(b) AIC~~~~~~~~(c) Quick-BIC~~~~~(d) CV~~~\vspace{0.1cm} \\
\includegraphics[width=5.2in,height=1.0in]{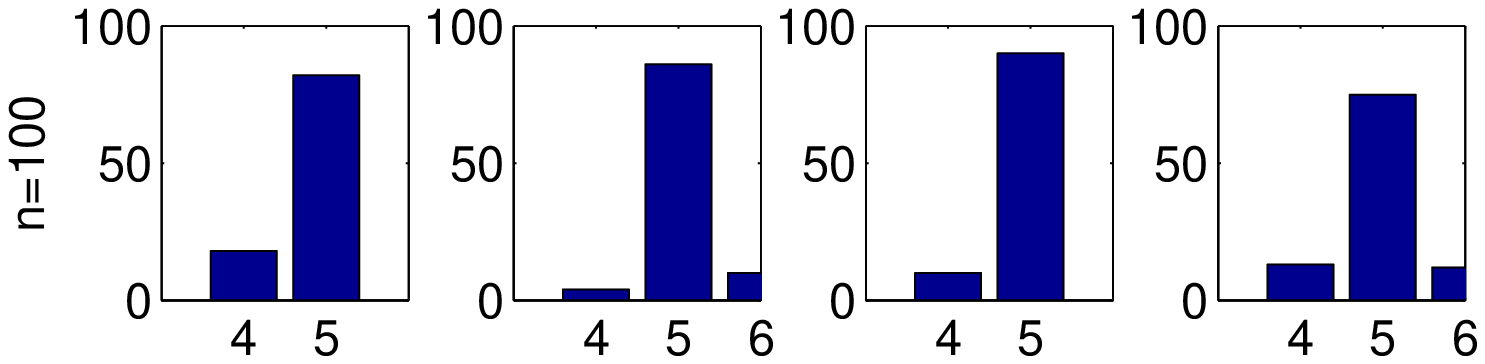}\\
~~~~~(e) BIC~~~~~~~~~~(f) AIC~~~~~~~~(g) Quick-BIC~~~~~(h) CV~~~\\
 \caption{Occurrence frequency for various $k$ (100 trials). The true value is 5. Top/bottom: $n$=40/100. } \label{fig:FA_EXP}
\end{figure}

\subsubsection{Gaussian Mixture Model}
We compared the approach Quick-IC which minimizes the
continuous version of MML (\ref{Eq:MML_GMM_continuous}) with
the MML-based method proposed
in~\cite{Figueiredo02} (denoted
by FJ's method), in terms of the chances of finding the preferred
component number and
the CPU time. 


Here we present the results on two data sets. For each data set,
we repeated each method for 100 trials. In each trial,
the data were randomly generated, and for initialization, the mean
of each Gaussian component was randomly chosen from the data
points. The results on the ``shrinking spiral" data
set~\citep{Ueda99}  are given in Fig.~\ref{fig:GMM_EXP}(a-c), and
Fig.~\ref{fig:GMM_EXP}(d-f) shows the results on the ``triangle
data", which were obtained by rotating and shifting three sets
of bivariate Gaussian points following
$\mathcal{N}(\mathbf{0},\textrm{diag}(2.25,0.25))$. For the first
data set, we set $m_{max}=30$ for both methods and $m_{min}=3$ for
FJ's method. The CPU time taken by Quick-IC and FJ's method
was $4$ and $33$ seconds, respectively. For the second data set,
we let $m_{max}=20$ and $m_{min} = 1$ for FJ's method. The CPU
time was about $7$ and $21$ seconds for the two methods.
Fig.~\ref{fig:GMM_EXP}(b, e) and (c, f) give the histograms of the
component numbers obtained by FJ's method and Quick-IC. One can
see that they give similar results. However, FJ'
method seems to produce less robust (more disperse) results for
the spiral data. We conjecture that it is caused by the
``annihilation" process in FJ's method~\citep{Figueiredo02}: FJ's
method annihilates the least probable component (with the smallest
mixing weight $\hat{\pi}_i$) to obtain a smaller model. This
process is discontinuous, and simply uses the magnitude of the
mixing weight to indicate the significance of the corresponding
component. In fact the significance of a particular component also
depends on its relationship to other components. 
As a consequence, when a component that has the least weight but
is actually significant is removed, the message length
$\mathcal{L}^{GM}_{MML}$ may increase, resulting in a sub-optimal
model .
\vspace{-0.3cm}
\begin{figure}[htp]
\centering
\hspace{-0.055in}\includegraphics[width=1.5in,height=1.26in]{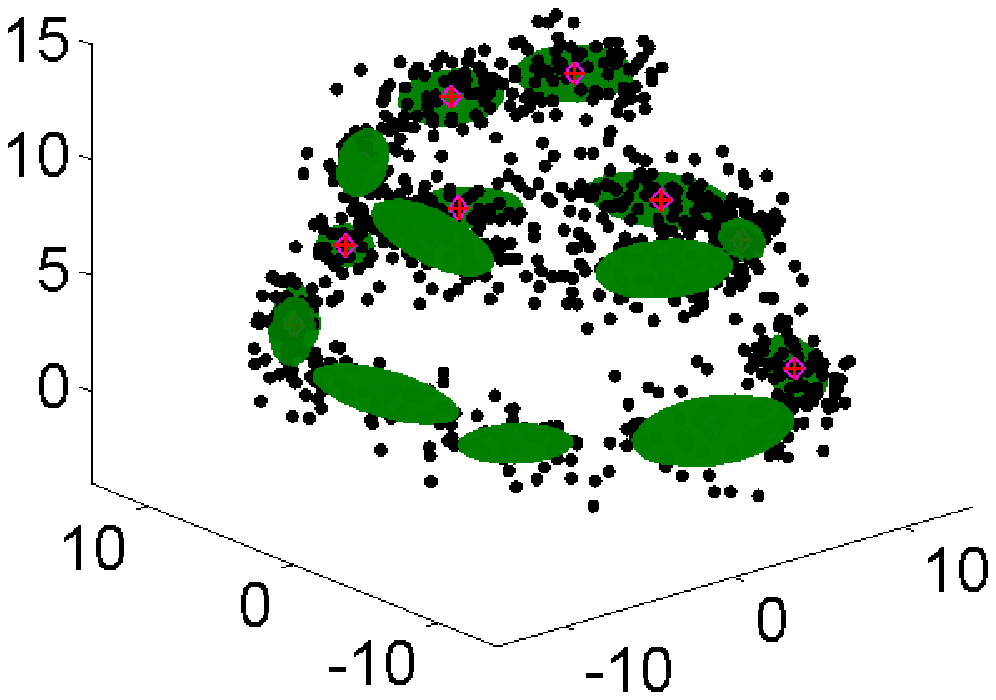}\hspace{-0.055in}\includegraphics[width=2.52in,height=1.26in]{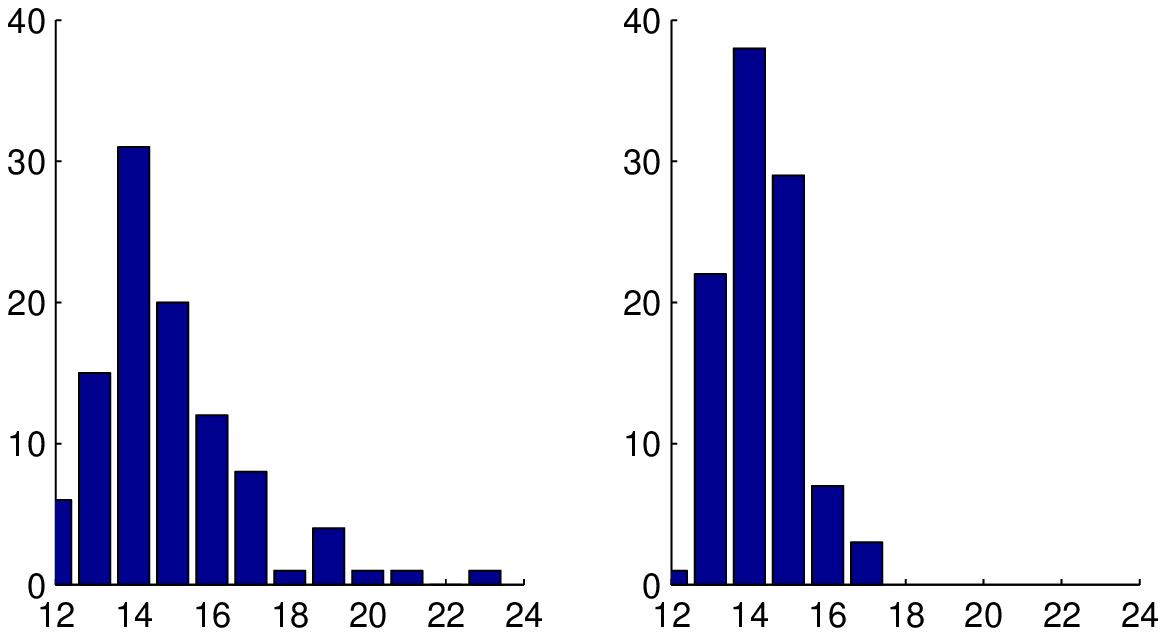}\\
~~~(a) spiral data~~~~~~~~~~~~~~(b) FJ's~~~~~~~~~~(c)
Quick-IC~~~~~~~\\
\hspace{-0.06in}\includegraphics[width=1.5in,height=1.26in]{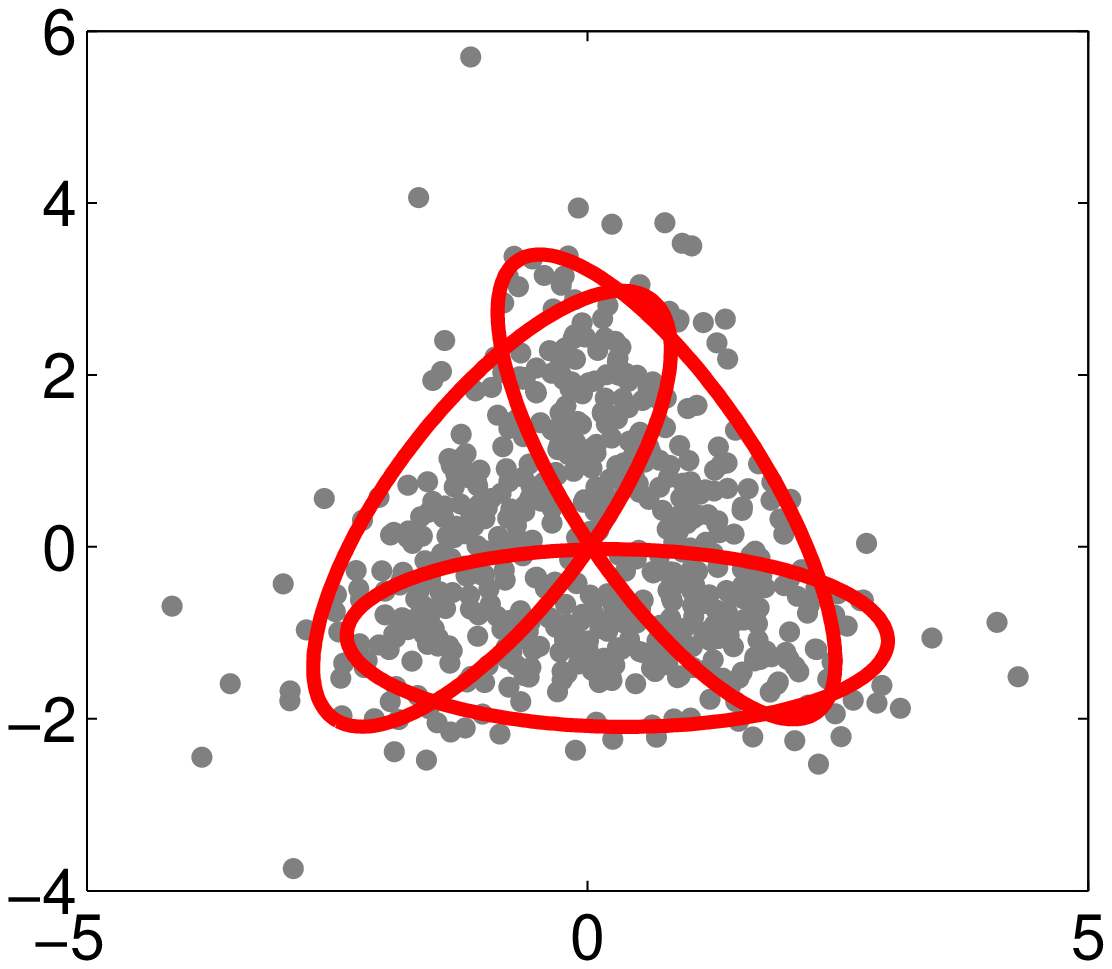}\hspace{-0.055in}\includegraphics[width=2.52in,height=1.26in]{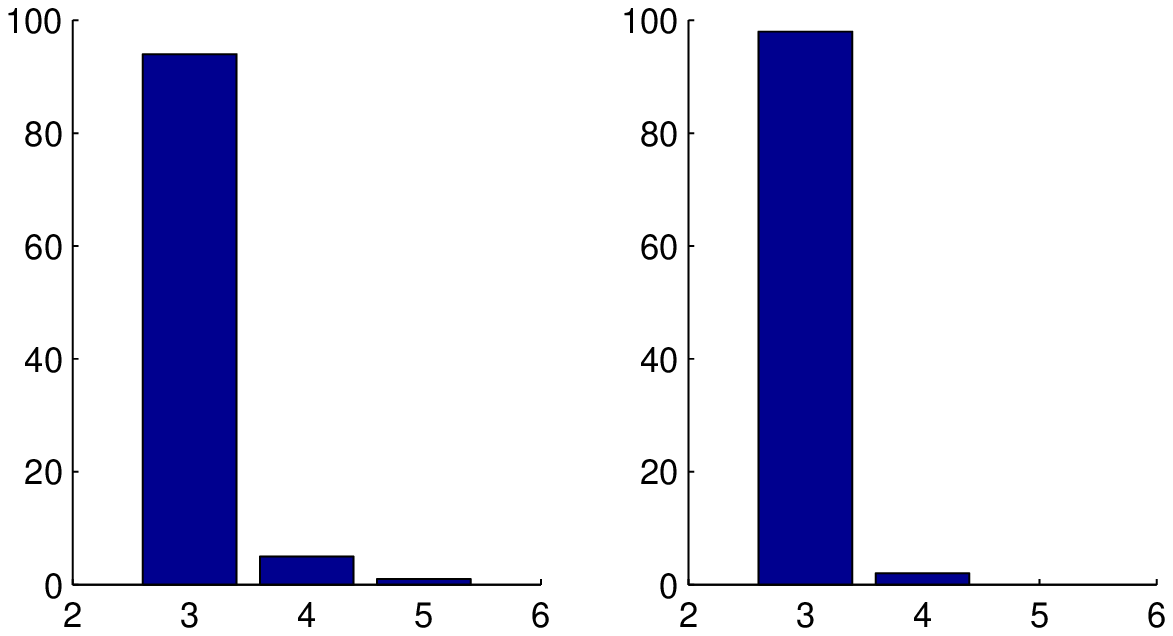}\\
~~~(d) triangle data~~~~~~~~~~~~(e) FJ's~~~~~~~~~~(f) Quick-IC~~~~~~~\\
 \caption{Experimental result on GMM learning. (a-c) spiral data with 900 points. (d-f) triangle
 data with 600 points. (a, d) data points and typical results.
 (b, e) histograms of
the number of components learned by FJ's method for 100
trials. (c, f) histograms of the number of components
learned by Quick-IC.
  } \label{fig:GMM_EXP}
\end{figure}

%

\subsubsection{Mixture of Factor Analyzers}
Quick-IC uses the MML approximator (\ref{Eq:npl_MML_MFA}) to determine
both the number of factor analyzers ($m$) and the local factor
numbers ($k_i$) in the MFA model. We tested the spiral data
(Figure~\ref{fig:GMM_EXP}a), and repeated the experiments with 50
trials. $m$ was initialized as $30$ and all of $k_i$ were
initialized as $3$. The number of factor analyzers learned by our
approach is always between 10 and 13 (with the chances 10: 8/50,
11: 21/50, 12: 14/50, and 13: 7/50). In the resulting model,
most factor analyzers have 1 factor, and occasionally there is one
factor analyzer with 2 factors (with one dominating the
other) or with no factor. This is consistent with the previous
results with $k_i$ {\it a prior} set to 1~\citep{Figueiredo02}.


We then constructed another synthetic data set in which the local factor
number varies for different factor analyzers.
Fig.~\ref{fig:mfa_guy}(a) plots the data points without noise, and
(b) shows the observed noisy data. The sample size was 5390. Quick-IC 
was compared with the variational Bayesian
method (VBMFA,~\cite{Ghahramani00}). 
We repeated both methods for 20 trials with different
initializations. Quick-IC and VBMFA took about $1.5$ and $10$
minutes for each run, and produced $9\sim 13$ and $10\sim14$
factor analyzers, respectively. Note that the data are clearly
non-Gaussian, so some factor analyzers may overlap to some extent
to model the data
well. 
Fig.~\ref{fig:mfa_guy}(c) and (d) show the results of the two
method in one run. Since Quick-IC does not generate new local
factor analyzers, it cannot separate two factor analyzers which
are initialized together. This can be alleviated by using a large
$m$ for initialization. On the other hand, sometimes VBMFA may
split one factor analyzer into two; we found that in 2 trials
VBMFA divided the ``arm'' or ``leg'' into two segments.
\vspace{-0.2cm}
\begin{figure}[htp]\vspace{-.0cm} \centering
\includegraphics[width=3.12in]{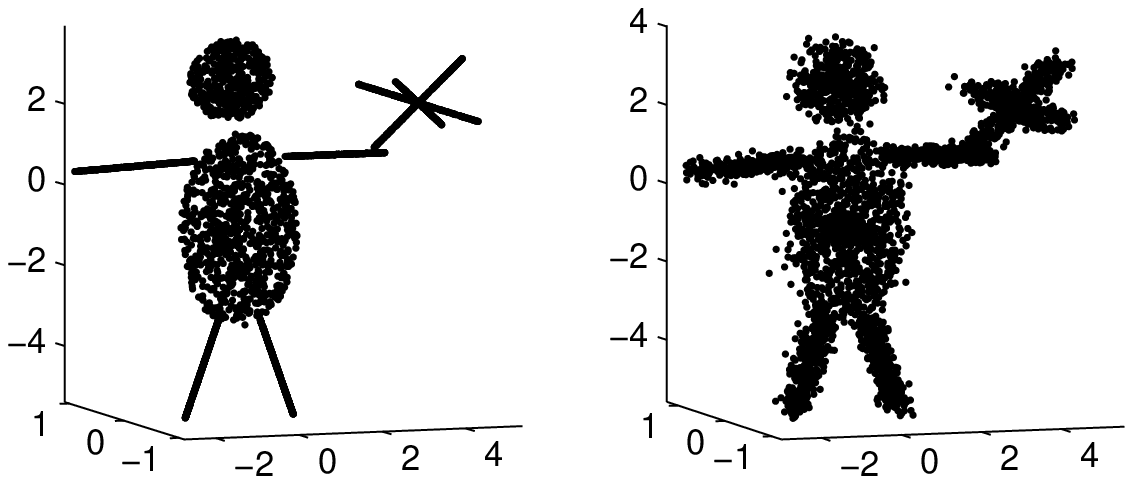}\\
~~~~~~~~~~~~~~~~~~~~~~~~~(a)~~~~~~~~~~~~~~~~~~~~~~~~~~~~~(b) ~~~~~~~~~~~~~~~~~~~~~~\\
\includegraphics[width=3.0in]{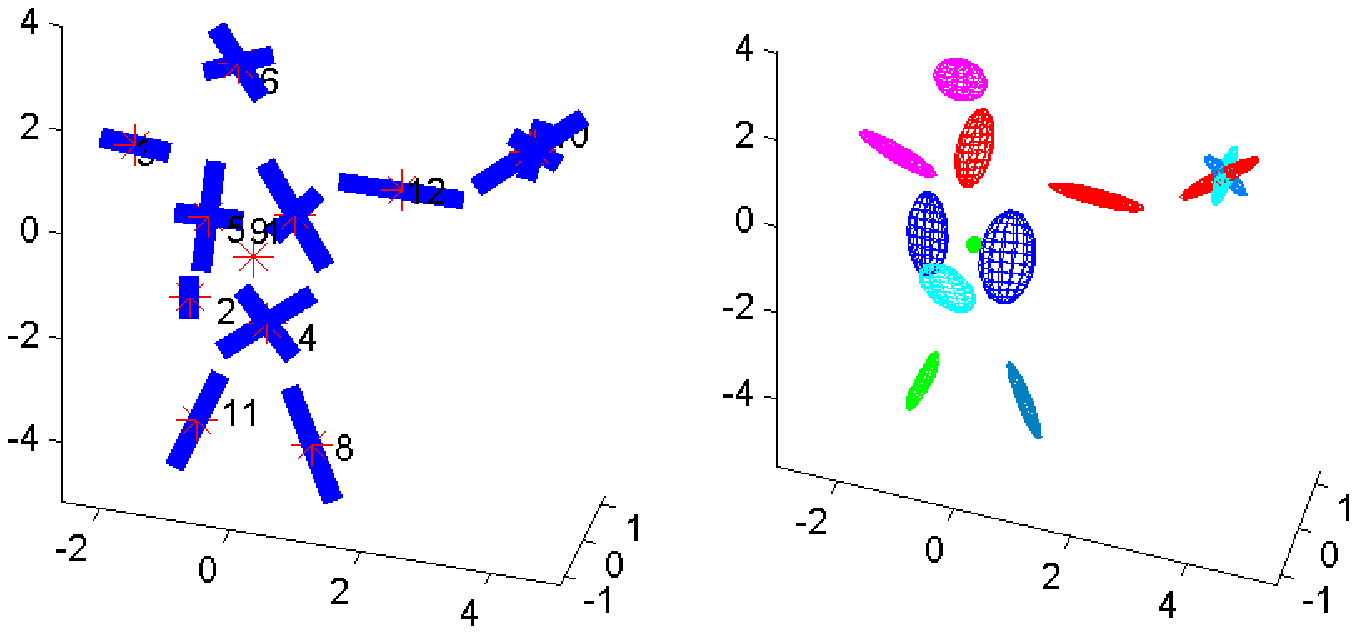}\\ \vspace{-0.2cm}
~~~~~~~~~~~~~~~~~~~~~~~~~(c)~~~~~~~~~~~~~~~~~~~~~~~~~~~~~(d) ~~~~~~~~~~~~~~~~~~~~~~\\
 \caption{Results of MFA with factor analyzers having different
   $k_i$. (a) noiseless points.
 (b) noisy data for analysis.  (c) a structure learned by Quick-IC, 
with $m=12$. (d) that by VBMFA, with $m=12$.
 Note that (c) plots columns of loading matrices, while (d) depicts the contour of the Gaussian distribution of each factor analyzer. } \label{fig:mfa_guy}
\end{figure}

\section{Conclusion and Future Work}
We showed that under some conditions, the penalty used in adaptive
Lasso, which is the $\ell_1$ penalty with a data-dependent weight,
resembles an indicator function showing if this parameter
is active. This motivated us to approximate the
traditional model selection criterion by the penalized likelihood
with a fixed penalization parameter. The latter is continuous in the parameters, 
greatly facilitating the model selection procedure. We formulated this idea as the Quick-IC approach. 
We showed that for finite samples, Quick-IC produces exactly the same
model as the corresponding information criterion when the Fisher information matrix is diagonal. We also investigated more general cases. Furthermore, for
some complex and non-regular models, we provided continuous
approximators to their model selection criteria, by using suitable
penalty forms
and data-adaptive weights. 
We have demonstrated that for these models, our simple
approach is computationally very efficient in model selection, and that its results are similar to those produced by the corresponding IC. One line
of our future research is to investigate the theoretical properties 
of Quick-IC for non-regular models such as finite mixture models.

\end{document}